\def\eqref#1{equation~\ref{#1}}
\def\1{\bm{1}}
\DeclareMathAlphabet{\mathsfit}{\encodingdefault}{\sfdefault}{m}{sl}
\SetMathAlphabet{\mathsfit}{bold}{\encodingdefault}{\sfdefault}{bx}{n}
\newcommand{\E}{\mathds{E}}
\newcommand{\R}{\mathbb{R}}
\newcommand{\Var}{\mathrm{Var}}
\newif\ifcomments
\newcommand{\Z}{\mathbb{Z}}
\newcommand{\N}{\mathbb{N}}
\renewcommand{\P}{\mathds{P}}
\theoremstyle{plain}
\newtheorem{theorem}{Theorem}[section]
\newtheorem{proposition}[theorem]{Proposition}
\theoremstyle{definition}
\theoremstyle{remark}
\newtheorem{example}[theorem]{Example}
\title{Image Super-Resolution with Guarantees \\ via Conformalized Generative Models}
\author{%
  Eduardo Adame \\
  School of Applied Mathematics\\
  Getulio Vargas Foundation\\
  \texttt{eduardo.salles@fgv.br} \\
  \And
  Daniel Csillag\\
  School of Applied Mathematics\\
  Getulio Vargas Foundation\\
  \texttt{daniel.csillag@fgv.br} \\
  \And
  Guilherme Tegoni Goedert \\
  School of Applied Mathematics\\
  Getulio Vargas Foundation\\
  \texttt{guilherme.goedert@fgv.br} \\
}
\begin{document}

\maketitle

\begin{abstract}
    The increasing use of generative ML foundation models for image restoration tasks such as super-resolution calls for robust and interpretable uncertainty quantification methods.
    We address this need by presenting a novel approach based on conformal prediction techniques to create a `confidence mask' capable of reliably and intuitively communicating where the generated image can be trusted.
    Our method is adaptable to any black-box generative model, including those locked behind an opaque API, requires only easily attainable data for calibration, and is highly customizable via the choice of a local image similarity metric.
    We prove strong theoretical guarantees for our method that span fidelity error control (according to our local image similarity metric), reconstruction quality, and robustness in the face of data leakage. Finally, we empirically evaluate these results and establish our method's solid performance.
\end{abstract}

\section{Introduction}

Generative ML foundation models led to massive leaps in the capabilities of modern image synthesis and processing, spanning domains such as image generation, inpainting, and super-resolution.
Particularly in the case of image super-resolution, recent methods have become considerably adept at leveraging patterns in images to better recover complex textures, geometries, lighting and more.

In testament to these improvements, leading manufacturers are constantly improving and deploying tools based on these frameworks in every new generation of consumer devices. These widespread applications highlight an important question: How trustworthy are the predictions of these models?
When a model does some particular inpainting or super-resolution infill, what guarantees do we have that its predictions are truly accurate to reality, and not mere hallucinations? Therefore, it would be most desirable to have a proper uncertainty quantification over the predicted image.

However, most of the previous contributions to this endeavor have suffered from a lack of interpretability. In order to be widely adopted, a new framework should clearly communicate its uncertainty estimates to the public in a way that reflects how they will be used. This demand for interpretability fundamentally guides the properties and theoretical guarantees we seek to establish for our predictions, and thus on the underlying procedure. 
Of course, all of this is compounded by the usual challenges of having to do trustworthy uncertainty quantification that is model agnostic and can be employed effectively atop any ``black-box" foundation model.

In this paper, we address all these issues by proposing a method based on techniques from conformal prediction ~\citep{vovk-cp} and conformal risk control~\citep{conformal-risk-control}, while employing metrics designed for interpretability in concrete applications.
All our method requires is a handful of unlabelled high-resolution images that were not present in the training set for the diffusion model, and we achieve strong guarantees on our predictions that are also intuitive to the user and robust to violations of our key assumptions.  

Our main contributions are:

\begin{itemize}
    \item A new method to quantify uncertainty in images inpainted or augmented by diffusion-based models. Our method can work atop any black-box diffusion model, including models that are locked behind an opaque API, and requires only easily-attainable unlabelled data for calibration.
    \item We identify additional theoretical guarantees enjoyed by our model. In particular, we prove that our method also controls the PSNR of the predicted images, and show that it is reasonably robust to data leakage, reinforcing the effectiveness and robustness of our method.
    \item A comprehensive study of modelling choices in our approach,  revealing particular modifications from the base procedure that can significantly enhance performance. Particularly, we demonstrate that certain applications of low-pass filters can greatly improve our method's effectiveness.
\end{itemize}

We note that our procedure is also reasonably generalizable to other image restoration tasks, and provide examples in the supplementary material.

\paragraph{Related work} There have been some attempts on formal uncertainty quantification for image super-resolution with varying approaches. But most existing works, e.g., \citep{prev-interval-2,prev-interval-1}, have as their goal to produce, rather than a single image, an interval-valued image (i.e., an image where each pixel is represented by an interval rather than a single value), and ensure that these intervals will, with high probability, contain the `true' pixel values. However, this is fairly non-intuitive for the user and its underlying guarantees are a bit lax. As far as we are aware, the only existing solution that proposes some other way of quantifying uncertainty is \citep{prev-mask-noguarantee}, which produces a continuous ``confidence'' mask over the predicted image, meaning that each pixel in the image is assigned a confidence score in $[0, 1]$. However, their solution has no probabilistic guarantee, which is fundamental for reliable uncertainty quantification (see the supplementary material). Ours, in contrast, is backed by a plethora of such theorems. Finally, our proposed solution is closely related to the existing applications of conformal prediction to semantic segmentation, e.g., \citep{amnioml, semantic-seg-cp}.
Also worth noting are methods for uncertainty quantification for image super-resolution that do not have any formal guarantee of correctness; relevant works include both general uncertainty quantification techniques for deep learning such as Bayesian neural networks \citep{bayesian-nn} and Monte Carlo Dropout \citep{monte-carlo-dropout}, as well as more task-specific approaches such as \citep{notheory-1,notheory-2}.

\begin{figure}[!t]
    \centering
    \includegraphics[width=\textwidth]{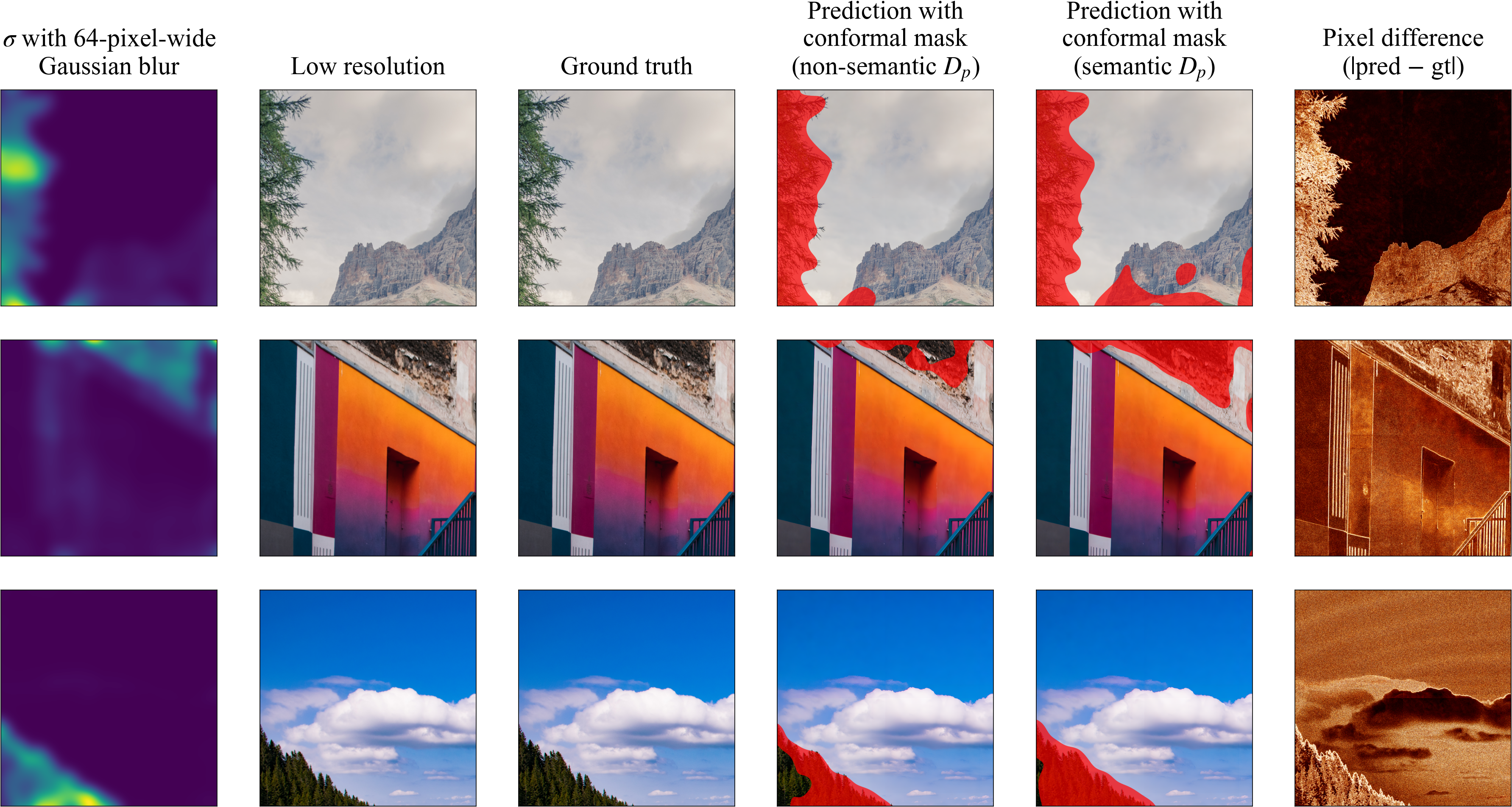}
    \caption{
    \textbf{Our method highlights meaningful uncertainty regions in generated images.}
    This figure presents a comparison of multiple high-resolution images with their corresponding conformal masks. Our conformal masks accurately highlight regions where the predictions significantly deviate from the ground truth, capturing differences in color, texture, and lighting.
    }
    \label{fig:lr-sr-masks}
\end{figure}
\section{Method}

\subsection{Conformal Mask Calibration}\label{sec:mask-calib}

Let us be supplied with any generative image super-resolution model $\mu : [0,1]^{w \times h \times 3} \to [0,1]^{kw \times kh \times 3}$, where $w$ and $h$ stand for the low-resolution image dimensions, and $k$ is the upscaling factor. Naturally, this is a stochastic function due to the generative nature of the model, so an intuitive (albeit nonrigorous) way to quantify model indecision would be to aggregate many realizations of the output image (e.g. by computing the variance of generated pixels). However, our methodology is capable of working naturally when supplied with an arbitrary estimate of model indecision that can be described by a function $\sigma : [0,1]^{w \times h \times 3} \to \R^{kw \times kh}$. We further discuss useful constructions of $\sigma$ in Section~\ref{sec:score-masks}.

Having received a lower resolution image $X$, we must consider how the model prediction $\widehat{Y} = \mu(X)$ differs from the true high resolution image $Y$. Our goal is to find a ``confidence mask'' $M(X)$ that indicates the region of the image whose content we trust, i.e. the pixels in the predicted image with indecision below a sought threshold. Formally, the mask is a (possibly stochastic) function $M: [0,1]^{w \times h \times 3} \to \{0,1\}^{kw \times kh}$ that has image $M(X) =  \left\{ p \in \Z \times \Z : [\sigma(X)]_p \leq t \right\}$, where $[ \bullet]_p$ is the image value at pixel $p$ (be it binary, grayscale or colored) and $t$ is a desired indecision threshold.

We seek masks that satisfy fidelity guarantees between the true and predicted high-resolution images. This fidelity is measured by a \textbf{fidelity error} defined as $\sup_{p \in M} D_p (Y, \widehat{Y})$, where $D_p$ is a function that measures the difference between two images around some pixel location $p$. We can employ any local measure of the difference as long as $0 \leq D_p (Y, \widehat{Y}) \leq 3$ for all $p$, $Y$ and $\widehat{Y}$ (e.g. $D_p (Y, \widehat{Y}) = \| [Y]_p - [\widehat{Y}]_p \|_1$). There are many different useful choices for $D_p$, with a few explored in Section~\ref{sec:fidelity-risk}.

Equipped with the previous definitions, we are able to produce these fidelity masks $M_\alpha(X)$ for any desired fidelity level $\alpha \in [0,1]$ with the guarantee that 

\[ \E\left[ \sup_{p \in M_\alpha (X)} D_p \left( \mu(X), Y \right) \right] \leq \alpha \]

\noindent by thresholding the output of $\sigma$ by some parameter $t$ in the construction of the mask. This parameter can then be calibrated for by using techniques of conformal prediction~\citep{vovk-cp} and conformal risk control~\citep{conformal-risk-control} with just access to unlabelled hold-out data, which has not been used to obtain either $\mu$ or $\sigma$ (if there is data contamination, weaker guarantees hold; see Proposition~\ref{thm:data-leakage}).
In particular, given such data $(X_i, Y_i)_{i=1}^n \subset [0,1]^{w \times h \times 3} \times [0,1]^{kw \times kh \times 3}$, we produce
\begin{equation} \label{eq:threshold}
    t_\alpha = \sup \left\{ t \in \R \cup \{+\infty\} :  
     \frac{1}{n+1} \sum_{i=1}^n \sup_{p; [\sigma(X_i)]_p \leq t} D_p(Y_i, \mu(X_i)) + \frac{3}{n+1} \leq \alpha \right\},
\end{equation}
thus obtaining
\begin{equation} \label{eq:mask-threshold}
    M_\alpha (X) := \left\{ p \in \Z \times \Z : [\sigma(X)]_p \leq t_\alpha \right\}.
\end{equation}

Crucially, all of our guarantees will hold for \emph{any} $\mu$ and $\sigma$, though the produced trust masks $M_\alpha$ will generally be larger (i.e., more confident) for well-chosen ones.

This methodology comes with a strong statistical assurance: a marginal ``conformal'' guarantee. It holds in expectation on the calibration data jointly with an additional new, `test' sample:

\begin{theorem}[Marginal conformal fidelity guarantee] \label{thm:conformal-guarantee}
    Let $\alpha \in \R$ and $n \in \N$. 
    Suppose we have $n+1$ i.i.d.\footnote{Technically, Theorem~\ref{thm:conformal-guarantee} holds under the weaker assumption of exchangeability, with the same proof. We stick to i.i.d. for simplicity.} samples $(X_i, Y_i)_{i=1}^{n+1}$ from an arbitrary probability distribution $P$ and let $t_\alpha$ and $M_\alpha$ be as in Equations~\ref{eq:threshold} and \ref{eq:mask-threshold} (and thus only a function of $X_i, Y_i$ with $i = 1, \ldots, n$). Then it holds that
    \[ \E_{(X_i, Y_i)_{i=1}^{n+1}}\left[ \sup_{p \in M_\alpha (X_{n+1})} D_p\left( \mu(X_{n+1}), Y_{n+1} \right) \right] \leq \alpha. \]
\end{theorem}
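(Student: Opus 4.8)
The plan is to recognize this as an instance of conformal risk control and adapt its proof to our monotone, bounded loss. Define, for each sample $i$, the loss
\[ \ell_i(t) := \sup_{p;\, [\sigma(X_i)]_p \le t} D_p(Y_i, \mu(X_i)), \]
so that the quantity in the statement is exactly $\ell_{n+1}(t_\alpha)$. Two structural properties drive the argument. First, $\ell_i$ is non-decreasing in $t$, since enlarging $t$ only enlarges the pixel set over which the supremum is taken. Second, $0 \le \ell_i(t) \le 3$ for every $t$, by the assumed bound on $D_p$. I would also record that $\ell_i$ is right-continuous in $t$ (the set $\{p : [\sigma(X_i)]_p \le t\}$ varies right-continuously in $t$), which is what makes the suprema defining the thresholds well behaved.

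The key device is to compare $t_\alpha$, which uses only the calibration samples $1, \ldots, n$, against a fully symmetric threshold that also sees the test point. Define
\[ t^\star := \sup\left\{ t \in \R \cup \{+\infty\} : \tfrac{1}{n+1}\sum_{i=1}^{n+1} \ell_i(t) \le \alpha \right\}. \]
Because $\ell_{n+1}(t) \le 3$ uniformly, the term $\tfrac{3}{n+1}$ in Equation~\ref{eq:threshold} upper-bounds the missing contribution $\tfrac{1}{n+1}\ell_{n+1}(t)$; hence every $t$ feasible in the definition of $t_\alpha$ is also feasible in the definition of $t^\star$, and taking suprema over the smaller set gives $t_\alpha \le t^\star$. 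Monotonicity of $\ell_{n+1}$ then yields the pointwise bound $\ell_{n+1}(t_\alpha) \le \ell_{n+1}(t^\star)$.

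It remains to show $\E[\ell_{n+1}(t^\star)] \le \alpha$, and here I would invoke exchangeability. The threshold $t^\star$ is a symmetric function of the $n+1$ samples, so the random variables $\ell_1(t^\star), \ldots, \ell_{n+1}(t^\star)$ are exchangeable and in particular share a common expectation. Consequently
\[ \E[\ell_{n+1}(t^\star)] = \E\left[ \tfrac{1}{n+1}\sum_{i=1}^{n+1}\ell_i(t^\star) \right] \le \alpha, \]
where the final inequality uses that $t^\star$ attains its own defining constraint. Chaining $\E[\ell_{n+1}(t_\alpha)] \le \E[\ell_{n+1}(t^\star)] \le \alpha$ then closes the argument, since $\ell_{n+1}(t_\alpha)$ is precisely $\sup_{p \in M_\alpha(X_{n+1})} D_p(\mu(X_{n+1}), Y_{n+1})$.

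I expect the main obstacle to be the measure-theoretic bookkeeping around the suprema rather than the probabilistic core. One must verify that $t^\star$ genuinely satisfies $\tfrac{1}{n+1}\sum_{i=1}^{n+1}\ell_i(t^\star) \le \alpha$ (as opposed to merely approaching it), which is exactly where right-continuity of the $\ell_i$ is needed, and one must dispatch the boundary cases cleanly: the regime $t^\star = +\infty$, and the degenerate case where the feasible set is empty so that the mask is taken to be empty with zero fidelity error. These are routine once the monotonicity and continuity of $\ell_i$ are in hand, but they are the steps most easily gotten wrong.
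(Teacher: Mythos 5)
Your proposal follows the paper's proof essentially step for step: your $t^\star$ is exactly the paper's ``lifted'' threshold $t^{(n+1)}_\alpha$, the comparison $t_\alpha \le t^\star$ via the $3/(n+1)$ slack is the same, as are the monotonicity step and the symmetrization (your ``exchangeable, hence equal expectations'' phrasing is equivalent to the paper's conditioning on the multiset of samples). Those steps are all sound.

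However, the step you yourself single out as the crux is resolved incorrectly, and it is a genuine gap. You claim that \emph{right}-continuity of $\ell_i$ ensures that $t^\star$ satisfies its own defining constraint, $\frac{1}{n+1}\sum_{i=1}^{n+1}\ell_i(t^\star) \le \alpha$. The direction is backwards. The loss $\ell_i(t) = \sup_{p;\,[\sigma(X_i)]_p \le t} D_p$ is indeed right-continuous and non-decreasing, but for such a step function the supremum of the sublevel set $\{t : F(t) \le \alpha\}$ is generically the first jump point at which $F$ \emph{exceeds} $\alpha$, so $F(t^\star) > \alpha$. Attainment of the supremum would require \emph{left}-continuity, and that fails here precisely because the mask uses a non-strict inequality: new pixels enter the mask at the threshold value itself. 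Concretely, for a single loss $\ell(t) = 3\cdot\ind[t \ge 0]$ and $\alpha < 3$, the sublevel set is $(-\infty,0)$, so $t^\star = 0$ while $\ell(t^\star) = 3 > \alpha$. This is not just a proof artifact: take all $n+1$ samples to be identical one-pixel images with $[\sigma(X_i)]_p \equiv 0$ and $D_p \equiv 3$, with $3/(n+1) \le \alpha < 3$ (and the natural convention that the error over an empty mask is $0$). Then $t_\alpha = 0$, the mask contains the pixel, and the expected fidelity error is $3 > \alpha$ --- the stated guarantee fails under ties. The repair is either to define the mask (equivalently the losses) with strict inequality $[\sigma(X)]_p < t$, which makes $\ell_i$ left-continuous and maps cleanly onto the conformal-risk-control assumptions after the substitution $\lambda = -t$, or to add an assumption that ties among $\sigma$-values occur with probability zero. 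To be fair, the paper's own proof silently elides the identical point (it simply asserts that $t^{(n+1)}_\alpha$ is feasible ``by definition''), so your attempt matches the paper modulo this shared gap; but since you explicitly leaned on the continuity claim, you should know that, as written, it is false.
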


Thanks to the discrete combinatorial structure of the set the infimum is taken over in Equation~\ref{eq:threshold}, the indecision threshold $t_\alpha$ (defined in Equation~\ref{eq:threshold}) can be efficiently computed with the use of dynamic programming in $O(n d \log d)$ time, where $n$ is the number of images in the calibration set and $d$ is the number of pixels in each image. In contrast, a naive brute force algorithm would take $\Omega(n^2 d^2)$ time.
The relevant pseudocodes can be found in the supplementary material.

\subsection{Producing Score Masks}\label{sec:score-masks}

A key component of our algorithm is the model indecision estimate $\sigma : [0,1]^{w \times h \times 3} \to \R^{kw \times kh}$.
A good $\sigma$ should attain higher values for regions of the image where there is more uncertainty, and lower values for regions where we are more certain.
Nevertheless, our guarantees hold for any choice of $\sigma$. 

Considering the generative nature of our base models, one natural way to produce such a $\sigma$ is to take the pixel-wise empirical variance of $M$ generated images:
\begin{equation*}\label{eq:sigma-naive}
    [\sigma^\mathrm{var}(X)]_p = \widehat{\Var}_M \bigl[[\mu(X)]_p\bigr].
\end{equation*}
However, this may suffer from being too local: for example, if the model correctly knows that an edge must be present in a particular region of an image but slightly misplaces it by one or two pixels, there would be a significant mismatch between the ``true'' model uncertainty and the indecision estimate by $\sigma^\mathrm{var}$.

To resolve this, we propose to `smooth out' our predictions by performing a convolution with a low pass kernel $K$.
A naive way of doing so would be to convolve the images whose pixels we are taking the variances of directly: $\widehat{\Var}_M \bigl[[\mu(X) \ast K]_p\bigr]$. However, this has an unintended side effect: by applying the convolution directly to the generated images, we are effectively \emph{undoing} the super-resolution done by the model! Hence, we propose to instead apply the convolution to the computation of the variance, via its decomposition in terms of the second moment:
\begin{equation*}
    \begin{multlined}
    [\sigma^{\text{ker-}K}(X)]_p = \widehat{\E}_M \bigl[[\mu(X)^2 \ast K]_p\bigr] - \left( \widehat{\E}_M \bigl[[\mu(X) \ast K]_p\bigr] \right)^2.
    \end{multlined}
\end{equation*}
It should be noted that when $K$ is a 1-box kernel, we recover $\sigma^\mathrm{var}$.
After computing this patch-based variance, we further apply a Gaussian blur to the resulting variance map.
This additional smoothing step helps mitigate the risk of border artifacts being overly emphasized, which can otherwise lead to an undesired overestimation of uncertainty along edges.

Finally, we remark that ideally this model indecision would be estimated jointly with the upscaled image $\mu(X)$.
This is, however, fairly nontrivial and best left for future work.

\subsection{Choices of $D_p$}\label{sec:fidelity-risk}

A crucial point of our procedure is the definition of the precise fidelity error we are controlling. This is given by the choice of $D_p$, which is a function indexed by a pixel position $p$ that receives the real and predicted images $Y$ and $\widehat{Y}$ and returns a notion of how similar the two are around $p$. Our procedure is valid for \emph{any} choice of $D_p$ that is bounded within $[0, 3]$, though it is best to chose one for which $D_p(Y, \widehat{Y}) \to 0$ as $\widehat{Y} \to Y$. Here we highlight a couple of the most natural and useful:

\paragraph{Pointwise metric} $D_p(Y, \widehat{Y}) = \| [Y]_p - [\widehat{Y}]_p \|_1$, where $[Y]_p$ and $[\widehat{Y}]_p$ correspond to the pixel color of $Y$ and $\widehat{Y}$ at pixel position $p$ (and thus the 1-norm is necessary to condense their difference into a single number).

\paragraph{Neighborhood-averaged metric} $D_p(Y, \widehat{Y}) = \| [Y \ast K]_p - [\widehat{Y} \ast K]_p \|_1$, where $K$ is some convolution kernel corresponding to a low pass filter. This makes it so that single wrong pixels in the midst of many correct pixels do not influence the loss function too much, and generally leads to larger confidence masks.

In both cases, we consider the images in Lab color space. This ensures that all the color comparisons being done are perceptually uniform, which would not be the case in e.g., sRGB space.

Though perfectly valid and useful, it is well known that such pixel-wise comparisons struggle to capture semantic and perceptual properties of the underlying images.
Hence, both of the options presented above struggle to truly capture semantic differences, where a user would clearly note a difference between the predicted and ground truth images.
To this end, we propose a third option based on additional labelled data:

\paragraph{Semantic metric} We can suppose a stochastic function $S : [0,1]^{kw \times kh \times 3} \times [0,1]^{kw \times kh \times 3} \to \{0,1\}^{kw \times kh \times 3}$ that indicates a mask produced by a human-being denoting the differences between the two given high-resolution images (a value of 1 on the image represents a differing point). We can then consider $D_p (Y, \widehat{Y}) = [S(Y, \widehat{Y})]_p$.
Note that for the calibration procedure we only need to compute the $D_p$ on the calibration images, and thus the only samples of the human annotations $S(Y, \widehat{Y})$ that we need are on the calibration data.

\section{Additional theorical results} \label{sec:additional-theory}

\begin{figure}[t]
    \centering
    \includegraphics[width=\textwidth]{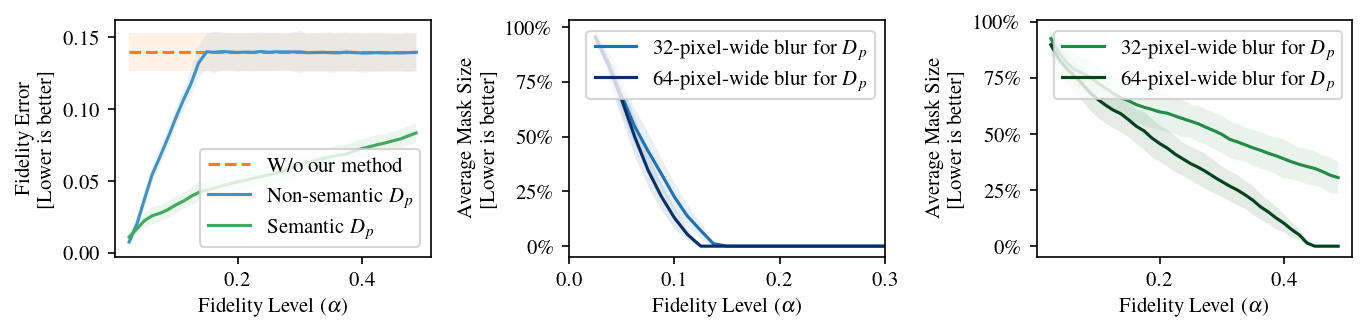}
    \caption{\textbf{Our method provably controls the fidelity error with accuracy.}
    \textit{Left:}
    The figure shows the non-semantic fidelity error obtained by our method for varying fidelity levels $\alpha$, for calibration with both the semantic $D_p$ (orange) and non-semantic $D_p$ (blue).
    As is shown in our theoretical guarantee, the error is tightly controlled by our method, being at most $\alpha$.
    \textit{Center and Right:}
    The plot displays the size of our confidence masks (i.e., how much of the image we do \emph{not} trust) for varying fidelity levels $\alpha$, for calibration done with a semantic $D_p$ (\textit{Center}) and a non-semantic $D_p$ (\textit{Right}).
    As $\alpha$ increases our masks get smaller, eventually reaching zero, i.e., trusting the whole image.
    }
    \label{fig:coverage}
\end{figure}

In this section we present additional theoretical properties enjoyed by our method, which highlight its flexibility and robustness.

\subsection{Our method provably controls PSNR}

So far we have only proven results for the `fidelity error' defined in Subsection \ref{sec:mask-calib}.
However, our results can also be directly mapped to more familiar metrics for image quality quantification.
In particular, we can prove strong guarantees on the PSNR of our predictions, a common metric of image fidelity and quality in computer graphics:

\begin{proposition}\label{thm:psnr}
    Let $\alpha \in \R$ and $n \in \N$.
    Suppose we have $n+1$ i.i.d. samples $(X_i, Y_i)_{i=1}^{n+1}$ from an arbitrary probability distribution $P$ and let $t_\alpha$ and $M_\alpha$ be as in Equations~\ref{eq:threshold} and \ref{eq:mask-threshold} (and thus only a function of $X_i, Y_i$ with $i = 1, \ldots, n$). Then it holds that
    \[
    \begin{multlined}
    \E_{(X_i, Y_i)_{i=1}^{n+1}}\left[ \mathrm{PSNR}\left( \mu(X_{n+1}), Y_{n+1} | M_\alpha (X_{n+1}) \right) \right] \geq -20 \log_{10} \alpha. 
    \end{multlined}
    \]
\end{proposition}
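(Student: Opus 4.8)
The plan is to reduce the PSNR guarantee to the already-established fidelity guarantee of Theorem~\ref{thm:conformal-guarantee}, instantiated with the pointwise metric $D_p(\widehat{Y},Y) = \|[Y]_p - [\widehat{Y}]_p\|_1$ (which indeed takes values in $[0,3]$, as required). Writing $\widehat{Y} = \mu(X_{n+1})$, $Y = Y_{n+1}$ and $M = M_\alpha(X_{n+1})$, I first unfold the definition of the masked PSNR. With pixel values in $[0,1]$ (so $\mathrm{MAX}=1$) and the squared error per pixel taken as the squared magnitude of the three-channel difference, the PSNR restricted to the trusted region is
\[
\mathrm{PSNR}(\widehat{Y}, Y \mid M) = -10\log_{10} \mathrm{MSE}_M, \qquad \mathrm{MSE}_M = \frac{1}{|M|}\sum_{p \in M}\bigl\| [Y]_p - [\widehat{Y}]_p\bigr\|_2^2 .
\]

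The first key step is a deterministic, sample-wise bound. Using the elementary norm inequality $\|v\|_2 \le \|v\|_1$ on $\R^3$, each per-pixel squared error satisfies $\|[Y]_p - [\widehat{Y}]_p\|_2^2 \le \|[Y]_p - [\widehat{Y}]_p\|_1^2$; and since the average over $p \in M$ is dominated by the maximum over $p \in M$, this yields
\[
\mathrm{MSE}_M \;\le\; \sup_{p \in M}\bigl\| [Y]_p - [\widehat{Y}]_p\bigr\|_1^2 \;=\; F^2, \qquad F := \sup_{p \in M} D_p(\widehat{Y}, Y).
\]
Because $-10\log_{10}(\cdot)$ is decreasing, this gives the sample-wise lower bound $\mathrm{PSNR}(\widehat{Y}, Y\mid M) \ge -20\log_{10} F$, valid for every realization.

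The second key step is to take expectations and push the logarithm outside. Taking $\E$ over the $n+1$ samples and invoking concavity of $\log_{10}$ through Jensen's inequality,
\[
\E\bigl[\mathrm{PSNR}(\widehat{Y}, Y\mid M)\bigr] \;\ge\; -20\,\E[\log_{10} F] \;\ge\; -20\log_{10}\E[F].
\]
Finally, Theorem~\ref{thm:conformal-guarantee} applied to this choice of $D_p$ gives $\E[F] = \E[\sup_{p\in M} D_p(\widehat{Y}, Y)] \le \alpha$; since $-20\log_{10}(\cdot)$ is decreasing, substituting $\E[F]\le\alpha$ completes the bound $\E[\mathrm{PSNR}]\ge -20\log_{10}\alpha$.

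The main obstacle is conceptual rather than computational: ensuring the chain of inequalities points consistently in the right direction. The norm inequality and the ``average $\le$ supremum'' step must both \emph{increase} the bound on $\mathrm{MSE}_M$ so that, after the \emph{decreasing} map $-10\log_{10}$, they decrease the PSNR bound; and Jensen must be applied in the concave direction so that pulling the logarithm back out again only lowers the expression. A secondary subtlety is that the result is tied to the specific pointwise $D_p$, so the proposition should be read as holding when calibration is carried out with that metric; the boundary cases ($F=0$, or an empty mask $M=\varnothing$) are handled by the usual convention $\mathrm{PSNR}=+\infty$, under which the inequality holds vacuously. I note also that the exact constant $-20\log_{10}\alpha$ corresponds to the non-channel-averaged $\mathrm{MSE}$ above; the more common channel-averaged convention only introduces an additive $+10\log_{10}3$, which strengthens the bound and leaves the stated inequality intact.
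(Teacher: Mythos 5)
Your proof is correct and follows essentially the same route as the paper's: bound the masked (R)MSE deterministically by the supremum fidelity error, use Jensen's inequality to move the expectation inside the logarithm, and invoke Theorem~\ref{thm:conformal-guarantee}. If anything, your write-up is slightly more careful than the paper's own proof---you handle the three color channels explicitly via $\|v\|_2 \le \|v\|_1$ (tying the squared error to the pointwise $D_p$ used in calibration) and adopt the $\mathrm{MAX}=1$ convention so the numerator term vanishes exactly, whereas the paper keeps $\max_{p}[Y_{n+1}]_p$ in the numerator and disposes of it with the looser remark that pixel values lie in $[0,1]$.
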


It is rather remarkable that, despite our procedure being originally designed in order to establish guarantees for uncertainty quantification, it also maps over to guarantees on a standard image quality metric. The relative functional simplicity of the PSNR may be a contributing factor to this result, but we expect that similar finds will soon follow for other metrics (though the proof would be more involved; the full proof of Proposition \ref{thm:psnr} can be found in the supplementary material).

\subsection{Under data leakage}

One crucial assumption of Theorem~\ref{thm:conformal-guarantee} is that we assume that the calibration data is independent of the base diffusion model -- i.e., that the calibration data is independent from (or at least exchangeable with) the data used to train the diffusion model.

Though arguably achievable through the collection of new data for calibration purposes, this is considerably harder to ensure when using foundation models which have been trained on massive datasets that attempt to span all data on the internet. Hence, it becomes essential to explore what happens when there is data leakage from the training data to the calibration data -- i.e., some amount of data in the calibration samples is actually already present in the training data.

Proposition~\ref{thm:data-leakage} provides worst-case bounds on the miscoverage error when there is data leakage. In particular, we consider that out of the $n$ calibration samples, $n_\mathrm{leaked} < n$ are actually drawn from the training data (or some other arbitrarily different data distribution), while the remaining $n_\mathrm{new} = n - n_\mathrm{leaked}$ are truly independent of the training samples.

\begin{proposition}\label{thm:data-leakage}
    Let $\alpha \in \R$ and $n \in \N$, with $n = n_\mathrm{new} + n_\mathrm{leaked}$.
    Suppose we fit our procedure as per Equations~\ref{eq:threshold} and \ref{eq:mask-threshold} with $n$ data points.
    Out of these $n$ data points, suppose that the first $n_\mathrm{new}$ are sampled from some arbitrary probability distribution $P$,
    and the latter samples (indexed by $n_\mathrm{new}+1, \ldots, n$) be sampled from some arbitrarily different probability distribution $Q$. 
    Then, as we take a new sample $X_{n+1}, Y_{n+1}$  from distribution $P$, it holds that
    \[ 
    \begin{multlined}
    \E_{(X_i, Y_i)_{i=1}^{n+1}}\left[ \sup_{p \in M_\alpha (X_{n+1})} D_p\left( \mu(X_{n+1}), Y_{n+1} \right) \right]
    \leq \alpha \cdot \frac{n_\mathrm{new} + n_\mathrm{leaked} + 1}{n_\mathrm{new} + 1}. 
    \end{multlined}
    \]
\end{proposition}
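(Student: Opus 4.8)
The plan is to reduce the data-leakage setting back to the clean exchangeable setting of Theorem~\ref{thm:conformal-guarantee}, applied to \emph{only} the $n_\mathrm{new}$ genuinely independent calibration points, but at an inflated fidelity level. To set up, I would introduce the monotone per-sample loss $L_i(t) := \sup_{p:\,[\sigma(X_i)]_p \le t} D_p(\mu(X_i), Y_i)$, which is non-decreasing in $t$ (a larger threshold only enlarges the admissible pixel set) and bounded in $[0,3]$ since $0 \le D_p \le 3$. In this notation the quantity we must bound is exactly $\E[L_{n+1}(t_\alpha)]$, and the defining constraint of Equation~\ref{eq:threshold} reads $\tfrac{1}{n+1}\sum_{i=1}^n L_i(t) + \tfrac{3}{n+1} \le \alpha$.

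The crux is a pointwise comparison of $t_\alpha$ against the threshold $\tilde t$ produced by running the \emph{same} procedure on only the $n_\mathrm{new}$ clean points, at the inflated level $\alpha' := \alpha \cdot \tfrac{n_\mathrm{new}+n_\mathrm{leaked}+1}{n_\mathrm{new}+1}$; that is, $\tilde t := \sup\{\,t : \tfrac{1}{n_\mathrm{new}+1}\sum_{i=1}^{n_\mathrm{new}} L_i(t) + \tfrac{3}{n_\mathrm{new}+1} \le \alpha'\,\}$. The key observation is that the feasible set of the full problem is contained in that of this reduced problem. Indeed, if $t$ satisfies the full constraint, then dropping the $n_\mathrm{leaked}$ nonnegative terms $L_i(t)\ge 0$ can only decrease the left-hand side, giving $\tfrac{1}{n+1}\sum_{i=1}^{n_\mathrm{new}} L_i(t) + \tfrac{3}{n+1} \le \alpha$; multiplying through by $\tfrac{n+1}{n_\mathrm{new}+1} = \tfrac{n_\mathrm{new}+n_\mathrm{leaked}+1}{n_\mathrm{new}+1}$ turns this \emph{exactly} into the reduced constraint at level $\alpha'$, where it is essential that the additive term $\tfrac{3}{n+1}$ rescales precisely to $\tfrac{3}{n_\mathrm{new}+1}$. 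Hence every $t$ feasible for the full problem is feasible for the reduced one, so taking suprema yields $t_\alpha \le \tilde t$ for every realization of the data.

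I would then push this through monotonicity and invoke the base theorem. Since $L_{n+1}$ is non-decreasing, $t_\alpha \le \tilde t$ gives $L_{n+1}(t_\alpha) \le L_{n+1}(\tilde t)$ almost surely, hence $\E[L_{n+1}(t_\alpha)] \le \E[L_{n+1}(\tilde t)]$. Now the $n_\mathrm{new}$ clean calibration points together with the test point $(X_{n+1},Y_{n+1})$ constitute $n_\mathrm{new}+1$ i.i.d.\ draws from $P$, and $\tilde t$ together with $L_{n+1}(\tilde t)$ are precisely the threshold and the test-point fidelity error of our procedure run on these $n_\mathrm{new}+1$ samples at level $\alpha'$. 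Theorem~\ref{thm:conformal-guarantee} therefore applies verbatim and yields $\E[L_{n+1}(\tilde t)] \le \alpha'$. Chaining the inequalities gives $\E[L_{n+1}(t_\alpha)] \le \alpha' = \alpha \cdot \tfrac{n_\mathrm{new}+n_\mathrm{leaked}+1}{n_\mathrm{new}+1}$, which is the claim.

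I expect the main obstacle to be the comparison step rather than any hard estimate. One must ensure $t_\alpha \le \tilde t$ holds \emph{pointwise} so that it survives taking expectations, which is why I phrase it as an inclusion of feasible threshold sets instead of plugging $t_\alpha$ directly into the reduced constraint; this framing also sidesteps any boundary/continuity question about whether the supremum $t_\alpha$ exactly attains its own defining constraint. The one piece of arithmetic that must genuinely line up is the cancellation of the boundary term $\tfrac{3}{n+1} \mapsto \tfrac{3}{n_\mathrm{new}+1}$ under the rescaling by $\tfrac{n+1}{n_\mathrm{new}+1}$ — conveniently the same factor needed for the sum — and it is exactly this coincidence that makes the inflation factor come out as stated. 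Everything else is a black-box application of Theorem~\ref{thm:conformal-guarantee} to the clean subsample.
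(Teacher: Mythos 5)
Your proof is correct and follows essentially the same route as the paper's: both arguments discard the nonnegative leaked losses, observe that the sum and the boundary term $3/(n+1)$ rescale by the very same factor $(n+1)/(n_\mathrm{new}+1)$ so that the constraint becomes exactly the clean-data calibration at level $\alpha' = \alpha \cdot \frac{n_\mathrm{new}+n_\mathrm{leaked}+1}{n_\mathrm{new}+1}$, and then invoke Theorem~\ref{thm:conformal-guarantee} on the $n_\mathrm{new}+1$ i.i.d.\ samples from $P$. Your pointwise feasible-set inclusion giving $t_\alpha \le \tilde t$ for every realization is a slightly cleaner packaging than the paper's exchange of expectation with $\sup_\mathrm{leaked}$ (which relies on a monotonicity/continuity claim), but the underlying idea is identical.
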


Note that $Q$ could even be the empirical distribution of the data used to train the base generative model. This result shows that our conformal prediction scheme (and split conformal prediction in general) is somewhat robust to data leakage, as long as it is not too severe in relation to the amount of calibration samples.

\begin{figure}[t]
    \centering
    \includegraphics[width=\textwidth]{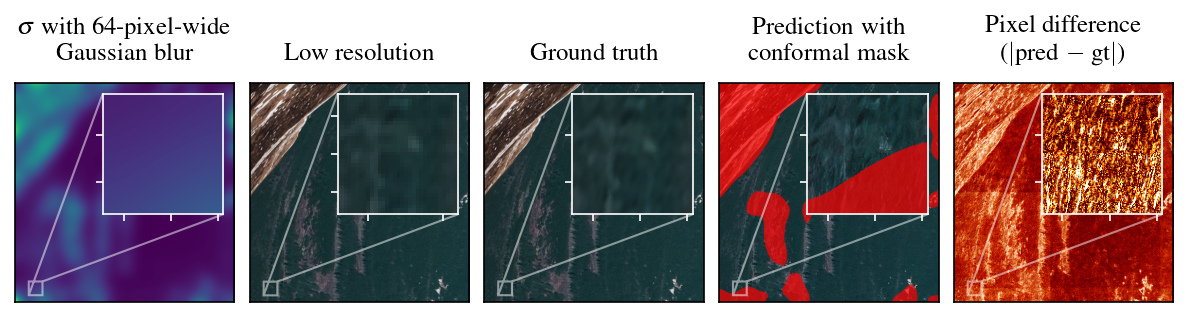}
    \caption{\textbf{Unreliable predictions are accurately detected.}
    This figure shows a close-up example where \(D_p\) is non-semantic (as shown in Figure~\ref{fig:lr-sr-masks}),
    highlighting a failure of the base model to reconstruct a blurred area.
    Our method correctly identifies this failure and assigns low confidence to the affected region, where the predicted image deviates from the ground truth.}

    \label{fig:zoom}
\end{figure}

\begin{table}[!t]
\centering
\resizebox{.95\columnwidth}{!}{%
\begin{tabular}{@{}lcccccc@{}}
\toprule
\multirow{2}{*}{Fidelity Level} & \multicolumn{3}{c}{Semantic $D_p$}                                & \multicolumn{3}{c}{Non-semantic $D_p$}                            \\ \cmidrule(l){2-4} \cmidrule(l){5-7} 
                                & \shortstack{Avg.\\ PSNR}       & \shortstack{Avg. Fidelity\\ Error} & \shortstack{Avg. Conformal\\ Mask Size} & \shortstack{Avg.\\ PSNR}        & \shortstack{Avg. Fidelity\\ Error} & \shortstack{Avg. Conformal\\ Mask Size} \\ \midrule
$\alpha = 0.075$                & 32.75 $\pm$ 1.55 & 0.03 $\pm$ 0.01     & 0.77 $\pm$ 0.07          & 30.23 $\pm$ 1.12 & 0.07 $\pm$ 0.01     & 0.43 $\pm$ 0.09          \\
$\alpha = 0.100$                & 32.65 $\pm$ 1.48 & 0.03 $\pm$ 0.01     & 0.73 $\pm$ 0.07          & 28.64 $\pm$ 0.93 & 0.09 $\pm$ 0.01     & 0.23 $\pm$ 0.06          \\
$\alpha = 0.200$                & 31.63 $\pm$ 1.32 & 0.05 $\pm$ 0.01     & 0.60 $\pm$ 0.08          & 26.82 $\pm$ 1.03 & 0.14 $\pm$ 0.01     & 0.00 $\pm$ 0.00          \\
$\alpha = 0.300$                & 30.86 $\pm$ 1.29 & 0.06 $\pm$ 0.01     & 0.50 $\pm$ 0.08          & 26.82 $\pm$ 1.09 & 0.14 $\pm$ 0.01     & 0.00 $\pm$ 0.00          \\
w/o our method   & 26.83 $\pm$ 1.06 & 0.14 $\pm$ 0.01     &  N/A                        & 26.82 $\pm$ 1.08 & 0.14 $\pm$ 0.01     &  N/A                        \\ \bottomrule \\
\end{tabular}%
}
\caption{
\textbf{Quantitative evaluation of our method under semantic and non-semantic settings.}
We evaluate the average PSNR, fidelity error, and average conformal mask size across different fidelity levels ($\alpha$) for both semantic and non-semantic $D_p$. For reference, we also include a baseline without our method (i.e., trusting the whole image).
Overall our method tightly controls the fidelity error and PSNR while producing precise and informative masks.
}
\label{tab:metrics}
\end{table}

\begin{figure}[t]
    \centering

    \begin{subfigure}[t]{.485\textwidth}
        \includegraphics[width=\textwidth]{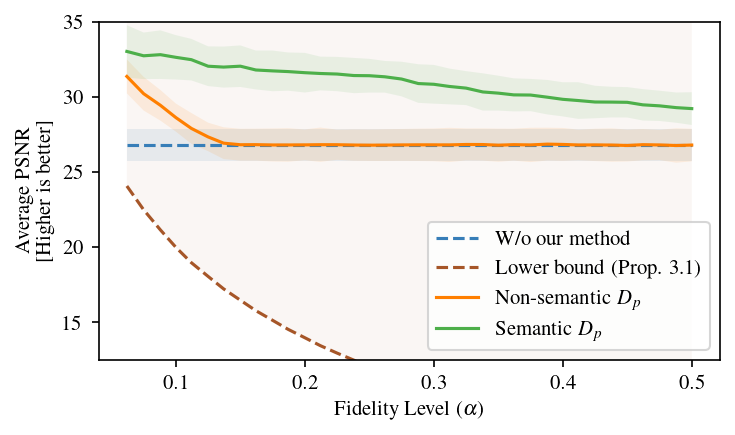}
    \end{subfigure}\hfill
    \begin{subfigure}[t]{.485\textwidth}
        \includegraphics[width=\textwidth]{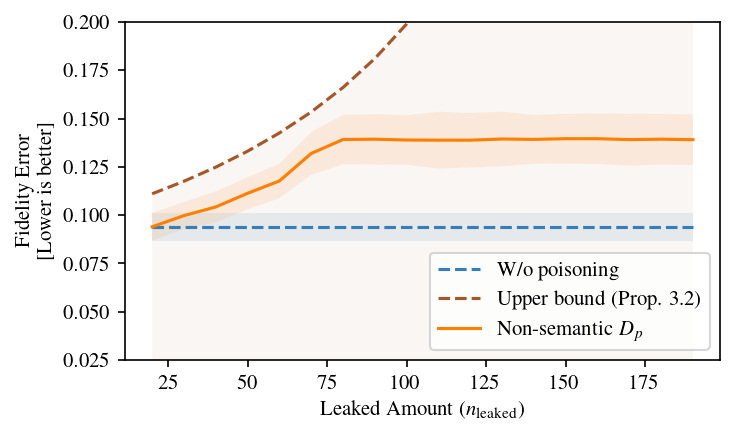}
    \end{subfigure}
    
    \caption{\textbf{Our controls the PSNR and is robust and under data leakage.} 
\textit{Left:} This experiment confirms that the PSNR is theoretically bounded, as established in Proposition~\ref{thm:psnr}. Additionally, the PSNR within the conformal masks—in both semantic and non-semantic settings—remains consistently higher than the baseline, indicating improved prediction quality in trusted regions.
\textit{Right:} This plot illustrates that the fidelity error can bounded under data leakage, as per Proposition~\ref{thm:data-leakage}. In both plots, the values plateau once the method reaches the point of trusting the whole images.}
    \label{fig:psnr}
\end{figure}

\section{Experiments}
\label{sec:experiments}

We dedicate this section to the empirical evaluation of our method, demonstrating its effectiveness.

\paragraph{Data}
All experiments were conducted using the Liu4K dataset~\citep{liu4K}, which contains 1,600 high-resolution (4K) images in the training set and an additional 400 4K images in the validation set. The dataset features a diverse collection of real-world photographs, including scenic landscapes, architectural structures, food, and natural environments. We use the training set for calibration procedures, and the test set for evaluation and metrics. We always work in Lab color space for its perceptual uniformity.

\paragraph{Base Model} We perform our evaluations atop SinSR~\citep{sinsr}, a state-of-the-art generative super-resolution method based on diffusion models. It performs super-resolution by conditioning the score function on a low-resolution image and applying diffusion in the latent space.

\paragraph{Compute}
Experiments were run on an Intel Xeon E5-2696 v2 processor (2.5GHz base, 3.6GHz boost, 18 threads available) with 40GB of RAM and an NVIDIA RTX 6000 Ada Generation 48GB GPU. Notably, the primary computational bottleneck is the inference process of the base diffusion models, while the conformal calibration step is highly efficient and runs fairly quickly on a CPU.
For reproducibility, the source code is available in \url{https://github.com/adamesalles/experiments-conformal-superres}, as well as in the supplementary material.

\paragraph{Baselines}
We compare our method to the following prior work:
\begin{itemize}
    \item \textbf{No uncertainty quantification:} this correponds to trusting the whole generated image. In terms of confidence masks, it is equivalent to producing an empty mistrust mask.
    \item \textbf{\citep{prev-interval-2}:} the output of this method is an image with interval-valued pixels, with a guarantee that, with high probability, the expected value of the percent of pixel intervals containing their true values will be at least $(1 - \alpha)$.
    \item \textbf{\citep{prev-mask-noguarantee}:} the output of this method is a continuous score for each pixel (which they call a `mask,' though their use of this term differs from ours), which is higher for regions of the image we can trust more. The idea is that when reweighting the pixels by this predicted score, the L1 loss of the generated image against the predicted image is, with high probability, at most some $\alpha$ in expectation. However, they do not prove such a validity property, and indeed it generally need not hold (see the supplementary material).
\end{itemize}
\begin{figure}[!t]
    \centering
    \includegraphics[width=\textwidth]{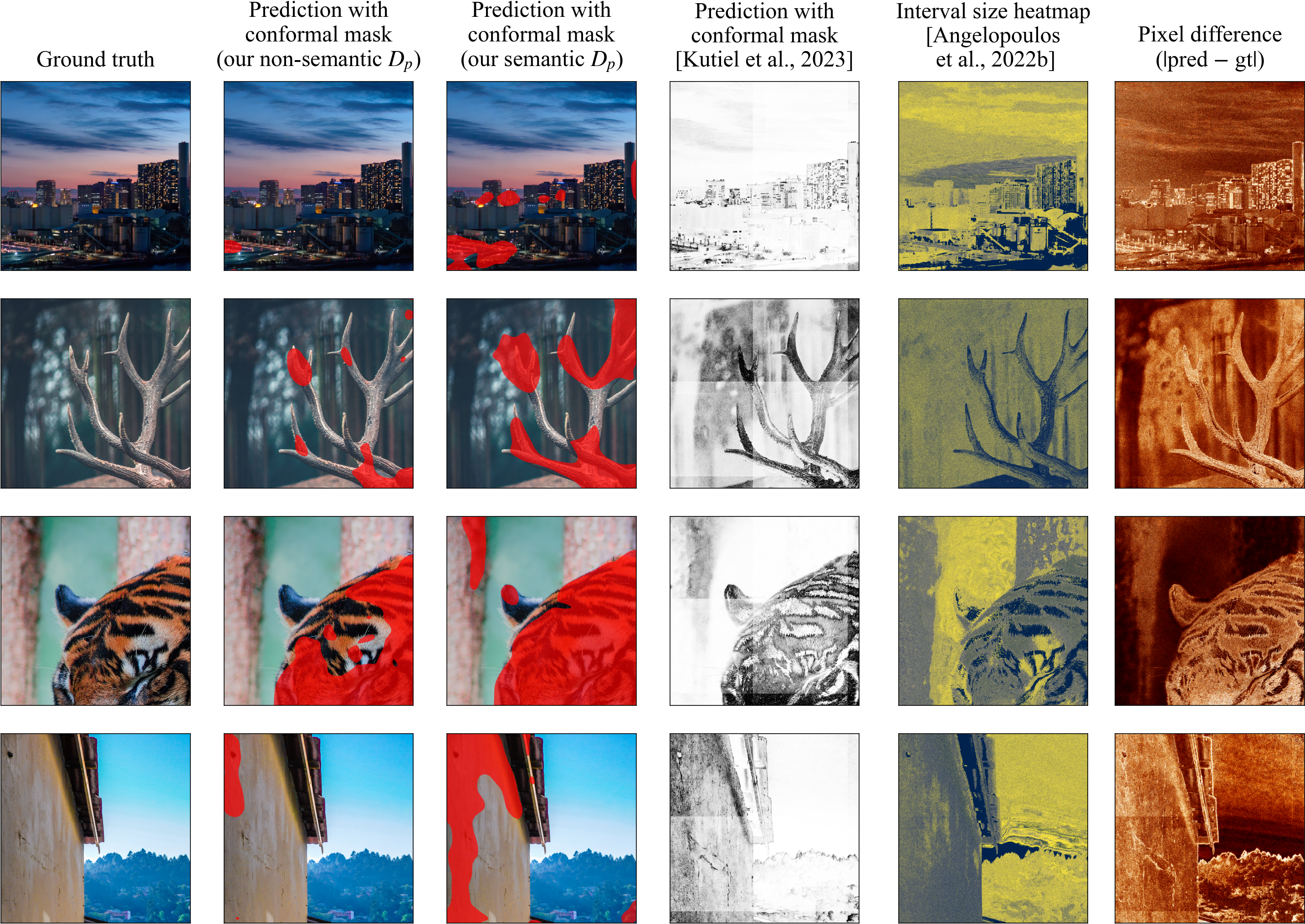}
    \caption{
    \textbf{Qualitative comparison to prior work.}
 This figure compares our method -- both semantic and non-semantic \(D_p\), under the same settings as Figure~\ref{fig:lr-sr-masks} -- against the methods of \citep{prev-interval-2} and \citep{prev-mask-noguarantee}. While our conformal masks highlight precise regions of uncertainty in an interpretable way, the method from \citep{prev-mask-noguarantee} produces continuous masks that closely mirror the original image rather than doing proper uncertainty estimation. Similarly, the heatmaps from \citep{prev-interval-2} do not visually convey uncertainty, making their interpretation more challenging.
}

    \label{fig:comparison}
\end{figure}
The prior work of \citep{prev-interval-2} and \citep{prev-mask-noguarantee} produce pixel-wise intervals and scores, respectively, and are thus not directly comparable to our binary mask-based uncertainty quantification methodology.
For this reason, on these two methods we are limited to a qualitative evaluation, which we do in Figure~\ref{fig:comparison}. 
That aside, we explore our method and show that it is a strict improvement over doing no uncertainty 

In Figure~\ref{fig:coverage}, we analyze how well our procedure controls the fidelity error in practice, and how big the confidence masks get.
We note that our theorems translate to excellent empirical performance, with the actual fidelity error closely following the specified fidelity level $\alpha$. Indeed, the two are essentially equal up until $\alpha$ becomes so large that we trust the whole image, at which point the fidelity error stays fixed.
As for the mask sizes, they steadily decrease as $\alpha$ grows, remaining informative for all but the most extreme levels of $\alpha$.
Table~\ref{tab:metrics} reveals similar patterns.

As observed in Figure~\ref{fig:zoom}, our method successfully generates accurate confidence masks even when the base model fails in the super-resolution task. This applies not just to perceptual attributes such as color and brightness but also to cases where the original image is blurred; in such scenarios, the base model often misinterprets the blur as a loss of a higher frequency, and hallucinates. Fortunately, our method effectively captures these nuances, preserving essential details and producing faithful, trustworthy results.

In Section~\ref{sec:additional-theory} we've theoretically shown that our method provably controls PSNR and has a certain robustness to data leakage; we now seek to empirically verify this.
Figure~\ref{fig:psnr}(left) shows the PSNR obtained by our method (i.e., the PSNR of the reconstruction constrained to the region of the image we trust) over varying fidelity levels, along with the PSNR obtained by the base reconstruction and our bound from Proposition~\ref{thm:psnr}. We see that our bound is somewhat lax -- especially for higher levels of $\alpha$ -- but it is valid.
Similarly, Figure~\ref{fig:psnr}(right) seeks to empirically measure our method's robustness to data leakage. For each level of data leakage ($n_\mathrm{leaked}$) we run our method's calibration with $n_\mathrm{leaked}$ of our samples being ``perfect predictions'' where $D_p \equiv 0$, i.e., samples that we are perfectly overfit on. Again we see the validity of our bound, and note that it is more precise for smaller amounts of leakage. After a certain point the fidelity error stabilizes at the highest possible fidelity risk (i.e., the fidelity risk when we trust the whole image).

\begin{figure}[!t]
    \centering
    \includegraphics[width=.8\textwidth]{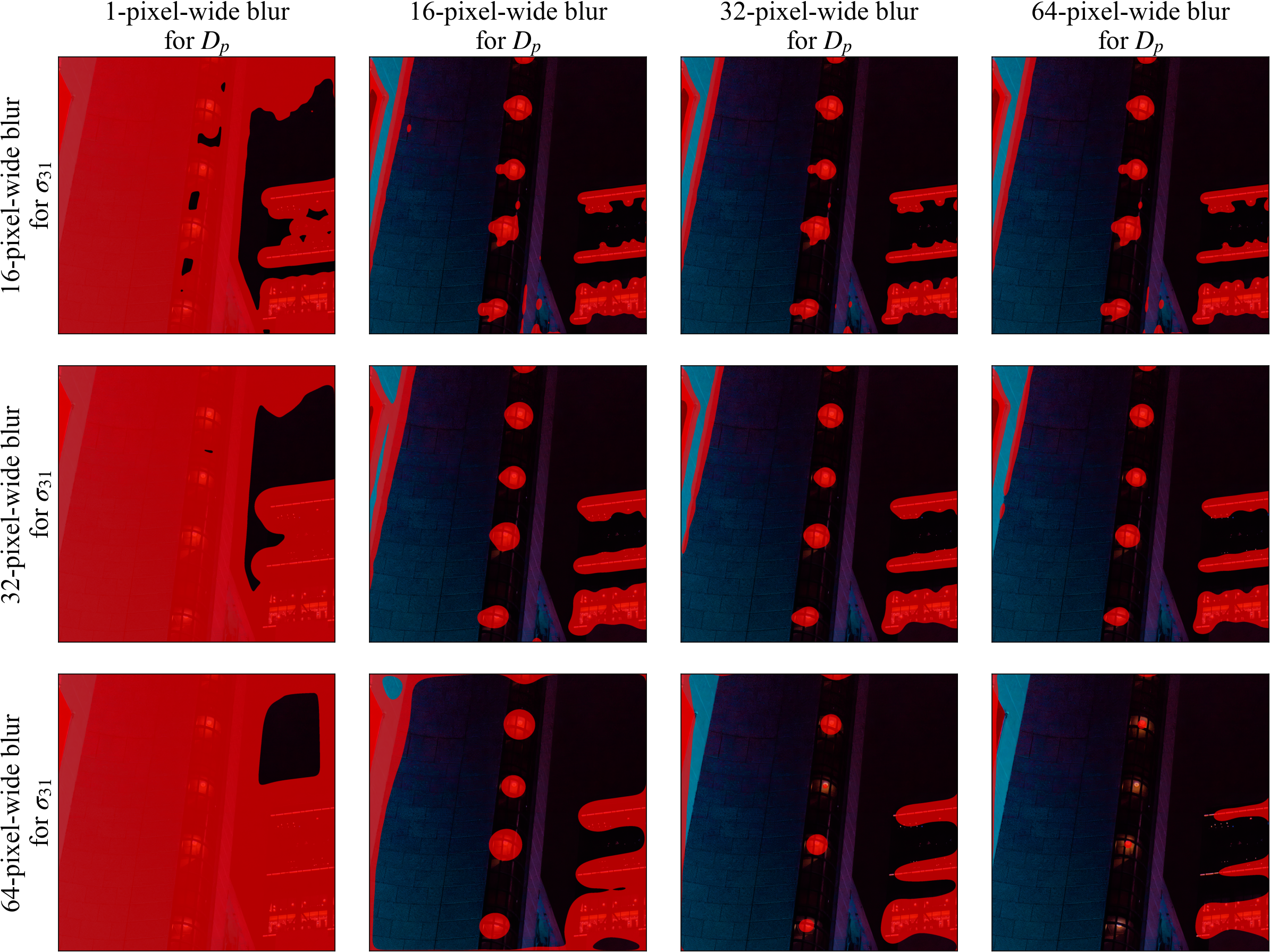}
    \caption{\textbf{Blurring improves our conformal masks.} Red regions correspond to the conformal confidence masks, denoting where we mistrust our predictions. Note how, as the radius of the Gaussian blur for $D_p$ increases, so does the coverage of the trusted regions; and, as the radius of the Gaussian blur for $\sigma$ increases, not only does the trusted cover also increases, but its regions become more contiguous and visually appealing.}
    \vspace{-1.0em}
    \label{fig:sigma_and_dp}
\end{figure}

We also investigate the size of the predicted confidence masks, a crucial aspect in assessing the reliability of our method. Notably, the choice of $D_p$ plays a fundamental role in determining how much confidence we can assign to our predictions. If $D_p$ is too local (e.g., defined simply as $D_p(Y, \widehat{Y}) = \| [Y]_p - [\widehat{Y}]_p \|_1$), the model lacks sufficient spatial context to make robust assessments, resulting in overly conservative confidence estimates and smaller masks. 

By incorporating additional contextual information into $D_p$, such as smoothing the pixel-wise differences with a low-pass filter, we observe a substantial improvement in mask size and quality. This approach allows for larger, more informative confidence masks, as it accounts for a broader spatial region rather than relying solely on isolated pixel differences. As a result, the predictions not only become more faithful to the underlying image structure but also exhibit greater visual coherence. Empirically, this effect is evident in Figure~\ref{fig:sigma_and_dp}, where the use of a smoothed $D_p$ leads to masks that better capture the regions of uncertainty while preserving the overall perceptual integrity of the image. These findings reinforce the importance of designing $D_p$ to effectively balance local accuracy with global consistency, ultimately enhancing both interpretability and reliability in uncertainty estimation.

Finally, we compare against all of our baselines in Figure~\ref{fig:comparison}. Our outputs are much more immediately interpretable and useful than those of \citep{prev-interval-2}, whose interpretation is tricky due to the combination of a lower+upper image.
Similarly, note that though the pixel scores of \citep{prev-mask-noguarantee} are comparatively more interpretable, they are less informative as they follow the original image too closely, whereas ours selects meaningful regions of the image.

\section{Conclusion}

In this work, we have presented a new method for performing uncertainty quantification for image super-resolution based on generative foundation models endowed with statistical guarantees.
Our method requires only easily attainable unlabeled data and is adaptable over any base generative model, including those locked behind an opaque API.
We also prove that our proposed solution satisfies properties beyond that of conformal risk control, further strengthening it.
We expect our method to be broadly useful in practice, including in scenarios beyond image super-resolution.

Nevertheless, our method has a few key limitations: for one, we assume that our calibration data together with a sample from the test set are exchangeable; effectively, this corresponds to giving guarantees for in-distribution behaviour. Though conformal prediction methods are known to be relatively robust to some changes in distribution, significant distribution shift may void our guarantees in principle. Moreover, it is worth noting that our algorithm requires the generation of multiple high-resolution images, which can be a significant increase in computation time.
Overall, we believe these to be exciting directions for future work.

\section*{Impact Statement}

This paper presents work whose goal is to advance the field of trustworthy machine learning.
However, it is important to note that advances in image super-resolution technology can have significant societal impacts.
For example, as much as our work enables the trustworthy use of super-resolution methods in crucial fields such as medical imaging, it could also be leveraged for harmful tasks such as reversing image censoring (e.g. censoring via blurred pixels).
But we highlight that our approach's capability is generally constrained by the capabilities of the base super-resolution model, which we believe limits the negative impact of our work.

\section*{Acknowledgements}

This project is part of a global initiative funded by the International Development Research Centre (IDRC) on AI for Global Health, in collaboration with the UK International Development.
GTG acknowledges partial funding from FAPERJ.

\bibliography{bibliography}
\bibliographystyle{plainnat}

\newpage
\appendix

\section{Proofs}

\begin{proof}[Proof of Theorem~\ref{thm:conformal-guarantee}]
    This proof is done via the standard conformal risk control argument~\citep{conformal-risk-control}.

    Consider the ``lifted'' threshold
    \[ t^{(n+1)}_\alpha = \sup \left\{ t \in \R \cup \{+\infty\} : \frac{1}{n+1} \sum_{i=1}^{n+1} \sup_{p; [\sigma(X_i)]_p \leq t} D_p \left( \mu(X_i), Y_i \right) \leq \alpha \right\}, \]
    which, as opposed to their ``unlifted'' counterpart $t_\alpha$, leverage the $(n+1)$-th sample as well and does not include the $3/(n+1)$ term.

    Note that it is certainly the case that $t^{(n+1)}_\alpha \geq t_\alpha$.
    Moreover, note that the fidelity error is monotone with $t$, and so
    it suffices to show that the fidelity function with $t^{(n+1)}$ is upper bounded by $\alpha$.

    Let $Z_\ast$ be the multiset of the samples $(X_i, Y_i)_{i=1}^n$ -- i.e., a random variable representing the samples, but with their order discarded. Hence, upon conditioning on $Z_\ast$, all the randomness that remains is that of the order of the samples. It then follows:
    \[ \E\left[\sup_{p; [\sigma(X_{n+1})]_p \leq t^{(n+1)}_\alpha} D_p(\mu(X_{n+1}), Y_{n+1}) \middle| Z_\ast\right] = \frac{1}{n+1} \sum_{i=1}^{n+1} \sup_{p; [\sigma(X_{n+1})]_p \leq t^{(n+1)}_\alpha} D_p(\mu(X_i), Y_i), \]
    and by the definition of $t^{(n+1)}_\alpha$, this is upper bounded by $\alpha$. Thus
    \begin{align*}
        & \E\left[\sup_{p; [\sigma(X_{n+1})]_p \leq t^{(n+1)}_\alpha} D_p(\mu(X_{n+1}), Y_{n+1})\right] = \E_{Z_\ast}\left[\E\left[\sup_{p; [\sigma(X_{n+1})]_p \leq t^{(n+1)}_\alpha} D_p(\mu(X_{n+1}), Y_{n+1}) | Z_\ast\right]\right]
        \\ &\qquad \leq \E_{Z_\ast}[\alpha] = \alpha,
    \end{align*}
    which concludes the proof.
\end{proof}

\begin{proof}[Proof of Proposition~\ref{thm:psnr}]
    The PSNR we are bounding is given by
    \begin{align*}
        &\E[\mathrm{PSNR}\left( \mu(X_{n+1}), Y_{n+1} | M_\alpha (X_{n+1}) \right)]
        := \E\left[10 \log_{10} \frac{(\max_{p \in M_\alpha(X_{n+1})} [Y_{n+1}]_p)^2}{\lvert M_\alpha(X_{n+1}) \rvert^{-1} \sum_{p \in M_\alpha(X_{n+1})} \left( [\mu(X_{n+1})]_p - Y_p \right)^2 } \right]
        \\ &\quad = 20 \E\left[\log_{10} \frac{\max_{p \in M_\alpha(X_{n+1})} [Y_{n+1}]_p}{\sqrt{\lvert M_\alpha(X_{n+1}) \rvert^{-1} \sum_{p \in M_\alpha(X_{n+1})} \left( [\mu(X_{n+1})]_p - Y_p \right)^2} } \right].
    \end{align*}
    Now, by Jensen's Inequality and standard properties of logarithms,
    \begin{align*}
        &20 \E\left[\log_{10} \frac{\max_{p \in M_\alpha(X_{n+1})} [Y_{n+1}]_p}{\sqrt{\lvert M_\alpha(X_{n+1}) \rvert^{-1} \sum_{p \in M_\alpha(X_{n+1})} \left( [\mu(X_{n+1})]_p - Y_p \right)^2} } \right]
        \\ &= 20 \left( \E\left[\log_{10} \max_{p \in M_\alpha(X_{n+1})} [Y_{n+1}]_p - \log_{10} \sqrt{\lvert M_\alpha(X_{n+1}) \rvert^{-1} \sum_{p \in M_\alpha(X_{n+1})} \left( [\mu(X_{n+1})]_p - Y_p \right)^2} \right] \right)
        \\ &= 20 \left( \E\left[ \log_{10} \max_{p \in M_\alpha(X_{n+1})} [Y_{n+1}]_p \right] - \E\left[ \log_{10} \sqrt{\lvert M_\alpha(X_{n+1}) \rvert^{-1} \sum_{p \in M_\alpha(X_{n+1})} \left( [\mu(X_{n+1})]_p - Y_p \right)^2} \right] \right)
        \\ &\geq 20 \left( \E\left[ \log_{10} \max_{p \in M_\alpha(X_{n+1})} [Y_{n+1}]_p \right] - \log_{10} \E\left[ \sqrt{\lvert M_\alpha(X_{n+1}) \rvert^{-1} \sum_{p \in M_\alpha(X_{n+1})} \left( [\mu(X_{n+1})]_p - Y_p \right)^2} \right] \right);
    \end{align*}
    And, because the RMSE is upper bounded by the maximum error, we get that
    \begin{align*}
        &20 \left( \E\left[ \log_{10} \max_{p \in M_\alpha(X_{n+1})} [Y_{n+1}]_p \right] - \log_{10} \E\left[ \sqrt{\lvert M_\alpha(X_{n+1}) \rvert^{-1} \sum_{p \in M_\alpha(X_{n+1})} \left( [\mu(X_{n+1})]_p - Y_p \right)^2} \right] \right)
        \\ &\geq 20 \left( \E\left[ \log_{10} \max_{p \in M_\alpha(X_{n+1})} [Y_{n+1}]_p \right] - \log_{10} \E\left[ \sup_{p \in M_\alpha(X_{n+1})} \left( [\mu(X_{n+1})]_p - Y_p \right) \right] \right);
    \end{align*}
    And, by Theorem~\ref{thm:conformal-guarantee},
    \begin{align*}
        &20 \left( \E\left[ \log_{10} \max_{p \in M_\alpha(X_{n+1})} [Y_{n+1}]_p \right] - \log_{10} \E\left[ \sup_{p \in M_\alpha(X_{n+1})} \left( [\mu(X_{n+1})]_p - Y_p \right) \right] \right)
        \\ &\geq 20 \left( \E\left[ \log_{10} \max_{p \in M_\alpha(X_{n+1})} [Y_{n+1}]_p \right] - \log_{10} \alpha \right)
        \geq -20 \log_{10} \alpha,
    \end{align*}
    where the last step holds as long as all pixel values are in $[0, 1]$.

\end{proof}

\begin{proof}[Proof of \ref{thm:data-leakage}]
    We effectively want to bound the supremum of the expected fidelity error as the leaked data is allowed to alter freely.
    For convenience, let $\sup_\mathrm{leaked}$ denote the supremum over all possible values of the leaked samples $(X_i, Y_i)_{i=n_\mathrm{new}+1}^n$ (and $\inf_\mathrm{leaked}$ the corresponding infimum).

    Note that the error function is decreasing on the selected parameter $t$ and continuous.
    Hence:
    \[ \sup_\mathrm{leaked} \E\left[\sup_{p \in M_\alpha(X)} D_p(Y, \widehat{Y})\right] \leq \E\left[\sup_\mathrm{leaked} \sup_{p \in M_\alpha(X)} D_p(Y, \widehat{Y})\right] = \E\left[\sup_{p; [\sigma(X)]_p \leq \sup_\mathrm{leaked} t_\alpha} D_p(Y, \widehat{Y})\right], \]
    and in turn
    \begin{align*}
        \sup_\mathrm{leaked} t_\alpha
        &= \sup_\mathrm{leaked} \sup \left\{ t \in \R \cup \{+\infty\} : \frac{1}{n+1} \sum_{i=1}^n \sup_{p; [\sigma(X_i)]_p \leq t} D_p(Y_i, \mu(X_i)) + \frac{3}{n+1} \leq \alpha \right\}
        \\ &\leq \sup \left\{ t \in \R \cup \{+\infty\} : \inf_\mathrm{leaked} \frac{1}{n+1} \sum_{i=1}^n \sup_{p; [\sigma(X_i)]_p \leq t} D_p(Y_i, \mu(X_i)) + \frac{3}{n+1} \leq \alpha \right\}
        \\ &= \sup \Biggl\{ t \in \R \cup \{+\infty\} : \frac{1}{n+1} \sum_{i=1}^{n_\mathrm{new}} \sup_{p; [\sigma(X_i)]_p \leq t} D_p(Y_i, \mu(X_i)) \\ &\qquad\quad\ + \inf_\mathrm{leaked} \frac{1}{n+1} \sum_{i=n_\mathrm{new}+1}^n \sup_{p; [\sigma(X_i)]_p \leq t} D_p(Y_i, \mu(X_i)) + \frac{3}{n+1} \leq \alpha \Biggr\}
        \\ &= \sup \left\{ t \in \R \cup \{+\infty\} : \frac{1}{n+1} \sum_{i=1}^{n_\mathrm{new}} \sup_{p; [\sigma(X_i)]_p \leq t} D_p(Y_i, \mu(X_i)) + \frac{3}{n+1} \leq \alpha \right\}
        \\ &= \sup \left\{ t \in \R \cup \{+\infty\} : \frac{1}{n_\mathrm{new}+1} \sum_{i=1}^{n_\mathrm{new}} \sup_{p; [\sigma(X_i)]_p \leq t} D_p(Y_i, \mu(X_i)) + \frac{3}{n_\mathrm{new}+1} \leq \alpha \cdot \frac{n + 1}{n_\mathrm{new} + 1} \right\}.
    \end{align*}
    Note that this corresponds to doing our calibration procedure only on the new data but with altered fidelity level $\alpha \cdot (n+1)/(n_\mathrm{new} + 1) = \alpha \cdot (n_\mathrm{new} + n_\mathrm{leaked}+1)/(n_\mathrm{new} + 1)$, and so, by the same arguments as in Theorem~\ref{thm:conformal-guarantee},
    \[ \E_{(X_i, Y_i)_{i=1}^{n+1}}\left[ \sup_{p \in M_\alpha (X_{n+1})} D_p\left( \mu(X_{n+1}), Y_{n+1} \right) \right] \leq \alpha \cdot \frac{n_\mathrm{new} + n_\mathrm{leaked} + 1}{n_\mathrm{new} + 1}. \]
\label{proof:3.1}
\end{proof}




\section{On \citep{prev-mask-noguarantee}}

The method of \citep{prev-mask-noguarantee} produces, given a low-resolution image $X$ in $[0, 1]^{w \times h \times 3}$ and an initial estimate of uncertainty $\sigma$ in $\R_{\geq 0}^{kw \times kh}$, a predicted high-resolution image $\hat{Y}$ in $[0, 1]^{kw \times kh \times 3}$ along with a continuous `confidence mask' $M$ in $[0, 1]^{kw \times kh}$, where higher values denote higher confidence at that region of the image.
Ideally, this continuous mask would be such that it would satisfy an RCPS-like guarantee
\begin{equation} \label{thm:kutiel-guarantee}
    \P\left[ \E\left[ \frac{1}{kw\,kh} \sum_{i,j} [M]_{i,j} \cdot \bigl\lvert [\hat{Y}]_{i,j} - [Y]_{i,j} \bigr\rvert \right] \leq \alpha \right] \geq 1 - \delta,
\end{equation}
for some chosen $\alpha$ and $\delta$. This goal is enunciated in Definition 2 in \citep{prev-mask-noguarantee}.\footnote{The one difference is the presence of the normalization $\frac{1}{kw\,kh}$, which is absent in their paper (they refer to a simple 1-norm). However, we found it to be necessary in order for their method to function, and conjecture that it was accidentally omitted in their work.}
Their calibration procedure is then presented in Section 4.2 of their paper; it does the following:
\begin{enumerate}
    \item For each $i = 1, \ldots, n$, compute
        \begin{align*}
            M^{(\lambda)} &:= \min \left\{ \frac{\lambda}{1 - [\sigma]_{i,j} + \epsilon}, 1 \right\}
            \\
            \lambda_i &:= \max \left\{ \lambda : \frac{1}{kw\,kh} \sum_{i,j} [M^{(\lambda)}]_{i,j} \cdot \bigl\lvert [\hat{Y}]_{i,j} - [Y]_{i,j} \bigr\rvert \leq \alpha \right\}
        \end{align*}
    \item Compute $\lambda := \mathrm{Quantile}_{1 - \delta}(\lambda_1, \ldots, \lambda_n)$.
    \item The resulting ``calibrated'' masks are produced by $M^{(\lambda)}$.
\end{enumerate}
However, they do not prove that this procedure satisfies Equation~\ref{thm:kutiel-guarantee}, other than a passing mention at the end of their Section 4.2. Indeed, the guarantee actually does not hold: intuitively this should be fairly immediate:
\begin{enumerate}
    \item They write\footnote{Modulo ajustments to notation} ``Finally, $\lambda$ is taken to be the $1-\delta$ quantile of $\{\lambda_k\}_{k=1}^n$, i.e. the maximal value for which at least $\delta$ fraction of the calibration set satisfies condition (5). Thus, assuming the calibration and test sets are i.i.d samples from the same distribution, the calibrated mask is guaranteed to satisfy Definition 2.''
        However, their condition (5) states that
        \[ \E\left[ \frac{1}{kw\,kh} \sum_{i,j} [M]_{i,j} \cdot \bigl\lvert [\hat{Y}]_{i,j} - [Y]_{i,j} \bigr\rvert \right] \leq \alpha, \]
        with an expectation -- this expectation is fundamentally absent in the calibration procedure.
    \item Additionally, the quantile taken is a simple empirical quantile. But the guarantee we want is that the populational risk is bounded. To satisfy this, one would need to slightly tweak the quantile; this is analogous to how in RCPSs you would apply a concentration inequality (e.g. Hoeffding).
\end{enumerate}
Indeed, we provide here a counterexample, in which their procedure does \emph{not} satisfy their stated guarantee.
\begin{example}
    Suppose that $\sigma$ is a deterministic 0 mask for all inputs. Then,
    to show that Equation~\ref{thm:kutiel-guarantee} does not hold, we just need that
        \begin{align*}
            & \P\left[ \E\left[ \frac{1}{kw\,kh} \sum_{i,j} [M]_{i,j} \cdot \bigl\lvert [\hat{Y}]_{i,j} - [Y]_{i,j} \bigr\rvert \right] \leq \alpha \right] < 1 - \delta
            \\ &\iff \P\left[ \E\left[ \frac{1}{kw\,kh} \sum_{i,j} [M]_{i,j} \cdot \bigl\lvert [\hat{Y}]_{i,j} - [Y]_{i,j} \bigr\rvert \right] > \alpha \right] > \delta
            \\ &\iff \P\left[ [M]_{i,j} \E\left[ \frac{1}{kw\,kh} \sum_{i,j} \bigl\lvert [\hat{Y}]_{i,j} - [Y]_{i,j} \bigr\rvert \right] > \alpha \right] > \delta
            \\ &\iff \P\left[ \E\left[ \frac{1}{kw\,kh} \sum_{i,j} \bigl\lvert [\hat{Y}]_{i,j} - [Y]_{i,j} \bigr\rvert \right] > \frac{\alpha}{[M]_{i,j} } \right] > \delta,
        \end{align*}
        for some $\alpha$ and $\delta$.
        Further suppose that there is some nonzero probability $\tau$ that $\frac{1}{kw\,kh} \sum_{i,j} \bigl\lvert [\hat{Y}]_{i,j} - [Y]_{i,j} \bigr\rvert = 0$ for all samples in the calibration set. Then, with this nonzero probability, $\lambda = 1$ regardless of $\delta$, implying that $[M]_{i,j} = 1/(1 + \epsilon)$.
        By chosing $\delta = \tau/2$ and $\alpha \leq \frac{1 + \epsilon}{2} \E\left[ \frac{1}{kw\,kh} \sum_{i,j} \bigl\lvert [\hat{Y}]_{i,j} - [Y]_{i,j} \bigr\rvert \right]$, we conclude.
\end{example}

\section{Pseudocodes}

Concrete implementations of these algorithms can be found in our code.

\begin{algorithm}[H]
    \caption{Conformal mask calibration (computation of $t_\alpha$) with dynamic programming}
    \begin{algorithmic}
        \State $T \gets \text{all unique values of}\ \sigma(X_1), \ldots, \sigma(X_n)$
        \State $T \gets \mathrm{Sort}(T)$
        \State $D \gets \text{values of}\ D_p\ \text{each entry of}\ T$ \Comment{can be computed jointly with the sorting}
        \State $I \gets \text{indices of the original images for each entry of}\ T$ \Comment{can be computed jointly with the sorting}
        \State $R_i \gets 0$ \Comment{risk so far on each observation}
        \State $R \gets \frac{3}{n+1}$ \Comment{total risk so far}
        \State $t_\star \gets -\infty$
        \State $l \gets \mathrm{NA}$ \Comment{last threshold seen so far}
        \For{$i = 1, \ldots, nd$}
            \If{$T_i \neq l$}
                \If{$R \leq \alpha$}
                    \State $t_\star \gets T_i$
                \EndIf
            \EndIf
            \State $l \gets T_i$

            \State $r \gets \max\{R_{I_i}, D_i\}$
            \State $R \gets R - \frac{1}{n+1} R_{I_i} + \frac{1}{n+1} r$
            \State $R_i \gets r$
        \EndFor
        \State \Return $t_\star$
    \end{algorithmic}
\end{algorithm}

\begin{algorithm}[H]
    \caption{Conformal mask calibration (computation of $t_\alpha$) with a brute force search}
    \begin{algorithmic}
        \State $T \gets \text{all unique values of}\ \sigma(X_1), \ldots, \sigma(X_n)$
        \State $t_\star \gets -\infty$
        \For{$t \in T$}
            \State compute risk $R \gets \frac{1}{n+1} \sum_{i=1}^n \sup_{p; [\sigma(X_i)]_p \leq t} D_p(Y_i, \mu(X_i)) + \frac{3}{n+1}$
            \If{$R \leq \alpha$}
                \State $t_\star \gets \max \{ t_\star, t \}$
            \EndIf
        \EndFor
        \State \Return $t_\star$
    \end{algorithmic}
\end{algorithm}

\section{Results under Distribution-Shift}

\begin{table}[!h]
\centering
\begin{tabular}{@{}lcc@{}}
\toprule
\multirow{2}{*}{Fidelity Level} & Semantic $D_p$                                & Non-semantic $D_p$                            \\
\cmidrule(l){2-2} \cmidrule(l){3-3} 
& \shortstack{Avg.\\ PSNR}  & \shortstack{Avg.\\ PSNR} \\ \midrule
$\alpha = 0.075$ & 13.34 $\pm$ 6.91 & 13.14 $\pm$ 6.55 \\
$\alpha = 0.100$ & 13.28 $\pm$ 7.30 & 12.36 $\pm$ 4.47 \\
$\alpha = 0.125$ & 13.01 $\pm$ 7.11 & 12.30 $\pm$ 3.81 \\
W/o our method & 11.73 $\pm$ 0.39 & 11.74 $\pm$ 0.39\\\bottomrule\\
\end{tabular}%
\caption{
\textbf{Quantitative evaluation of our methods under distribution-shift.}
We evaluate the average PSNR across different fidelity levels ($\alpha$) for both semantic and non-semantic $D_p$, as calculated on Table \ref{tab:metrics}. We utilized the calibration performed on the LIU4K dataset to calculate this metrics on the validation set of DIV2K~\citep{Agustsson_2017_CVPR_Workshops, Timofte_2017_CVPR_Workshops} with 100 samples from a different distribution.
Overall our method controls the PSNR point-wisely, but the intervals show that performs better on an in-distribution scenario. 
}
\label{tab:distshift}
\end{table}

\section{On Metric Comparisons with Baseline Methods}

Direct metric comparisons with the baseline methods of \citep{prev-interval-2} and \citep{prev-mask-noguarantee} present fundamental challenges due to output modality differences.
These methods produce continuous uncertainty scores rather than binary masks, making objective metric comparisons infeasible without modification.
While one could threshold their outputs to obtain binary masks, doing so fundamentally undermines their statistical guarantees, as these methods were not designed for binary mask production.
Furthermore, the thresholding process itself introduces methodological complications: for \citep{prev-interval-2}, a natural threshold at $\alpha$ exists, but for \citep{prev-mask-noguarantee}, threshold selection requires an additional calibration split, introducing another degree of freedom that complicates fair comparison.

Given these concerns, we argue that forcing these methods into a common evaluation framework would produce misleading comparisons that do not reflect their intended use cases.
Instead, we present qualitative visualizations that respect each method's design principles while highlighting the distinct advantages of our approach--namely, direct binary mask generation with maintained statistical guarantees and substantially tighter confidence regions (mean mask size of $0.73 \pm 0.07$ for our method at $\alpha=0.1$).

\newpage

\section{Results on Image Colorization}

Here we showcase a direct adaptation of our method for image colorization rather than super-resolution. The method works similarly, suggesting a broad applicability beyond our domain.

\begin{figure}[H]
    \centering
    \includegraphics[width=\textwidth]{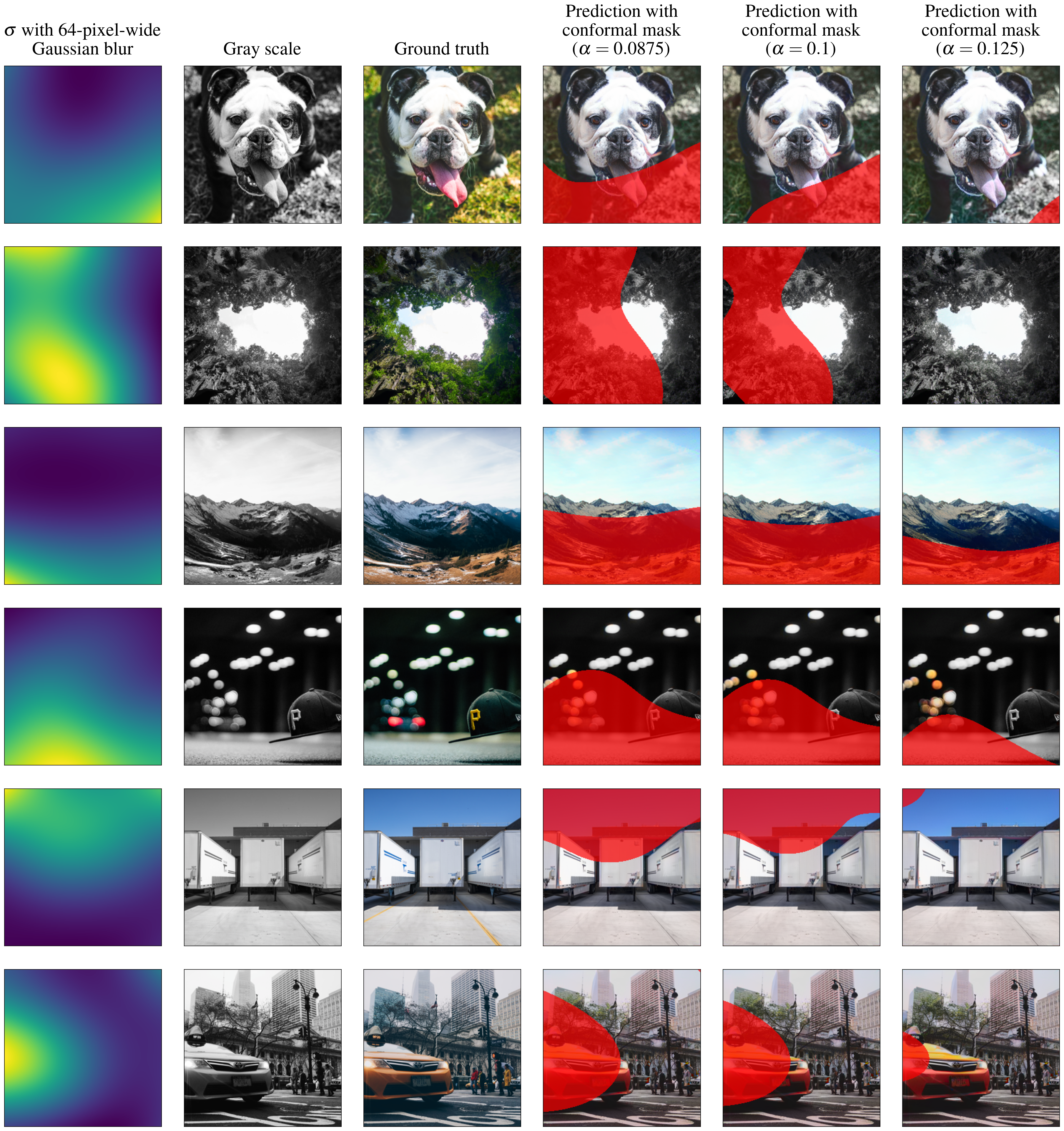}
    \caption{
    \textbf{Results of our method adapted for image colorization.}
    This figure presents a comparison of multiple color-restored images generated using the DDColor base model~\citep{kang2023ddcolor}, alongside their corresponding conformal masks for $\alpha \in \{0.0875, 0.1, 0.125\}$ and non-semantic $D_p$.
    Our conformal masks accurately highlight regions where color predictions significantly deviate from the ground truth, effectively capturing uncertainty related to hue variations, saturation inconsistencies, and lighting discrepancies.
    This demonstrates the versatility of our theoretical framework, indicating it can be directly extended to other image restoration tasks beyond super-resolution.
    }
    \label{fig:color-masks}
\end{figure}


\end{document}